\def\eqref#1{equation~\ref{#1}}
\def\1{\bm{1}}
\DeclareMathAlphabet{\mathsfit}{\encodingdefault}{\sfdefault}{m}{sl}
\SetMathAlphabet{\mathsfit}{bold}{\encodingdefault}{\sfdefault}{bx}{n}
\def\gW{{\mathcal{W}}}
\def\sW{{\mathbb{W}}}
\newcommand{\R}{\mathbb{R}}
\DeclareMathOperator{\diag}{diag}
\DeclareMathOperator{\rank}{rank}
\DeclareMathOperator{\cayley}{Cayley}
\newcommand{\diff}{\mathrm{d}}
\def\sW{{\mathbb{W}}}
\def\gW{{\mathcal{W}}}
\DeclareMathOperator{\Tr}{Tr}
\theoremstyle{plain}
\newtheorem{theorem}{Theorem}[section]
\newtheorem{lemma}[theorem]{Lemma}
\theoremstyle{definition}
\newtheorem{definition}[theorem]{Definition}
\theoremstyle{remark}
\newtheorem{remark}[theorem]{Remark}
\title{Norm-Bounded Low-Rank Adaptation}
\title{Norm-Bounded Low-Rank Adaptation}
\author{%
  Ruigang Wang$^{1}$, Krishnamurthy (Dj) Dvijotham$^2$, Ian R. Manchester$^1$ \\
  $^1$Australian Centre for Robotics, School of Aerospace, Mechanical and \\
  Mechatronic Engineering, The University of Sydney\\
  $^2$Google DeepMind\\
  \texttt{\footnotesize \{ruigang.wang, ian.manchester\}@sydney.edu.au}, \\ \texttt{\footnotesize  dvij@cs.washington.edu}\\
}
\begin{document}
\maketitle

\begin{abstract}
In this work, we propose norm-bounded low-rank adaptation (NB-LoRA) for parameter-efficient fine tuning. NB-LoRA is a novel parameterization of low-rank weight adaptations that admits explicit bounds on each singular value of the adaptation matrix, which can thereby satisfy any prescribed unitarily invariant norm bound, including the Schatten norms (e.g., nuclear, Frobenius, spectral norm). The proposed parameterization is  unconstrained, smooth, and complete, i.e. it covers all matrices satisfying the prescribed rank and singular-value bounds. Natural language generation experiments show that NB-LoRA matches or surpasses performance of competing LoRA methods, while exhibiting stronger hyper-parameter robustness.  Vision fine-tuning experiments show that NB-LoRA can avoid model catastrophic forgetting without minor cost on adaptation performance, and compared to existing approaches it is substantially more robust to a hyper-parameters such as including adaptation rank, learning rate and number of training epochs.
\end{abstract}

% keywords can be removed
\keywords{LoRA \and Robustness \and Catastrophic Forgetting}

\section{Introduction}
Large pretrained vision and language models have demonstrated impressive generalization capability across a wide variety of tasks; see, e.g. \citet{achiam2023gpt,touvron2023llama}. When a more specific target task is identified, however, it has been observed that parameter-efficient fine-tuning (PEFT) techniques, e.g.  \citet{houlsby2019parameter,hu2022lora}, can improve performance via quick model adaption with low computation and data requirements. The primary goal for an effective PEFT method is to achieve good adaptation performance with high training efficiency, i.e., dramatically fewer trainable parameters and training epochs. Since training efficiency is the target, ideally such a method will be quite robust to hyper-parameters. Alongside this primary goal, it is often also desirable to maintain the generalization performance of the original pretrained model as much as possible, i.e. avoid ``catastrophic forgetting'' \citet{qiu2023controlling,biderman2024lora}. 

Low-rank adaption (LoRA) \citep{hu2022lora} is a widely applied PEFT method, which parameterizes the update of pretrained weights $W_p\in \R^{m\times n}$ during finetuning as 
\begin{equation}\label{eq:lora}
    y=(W_{p}+W)x=\left(W_{p}+\frac{\alpha}{r}B^\top A\right)x
\end{equation}
where $A\in \R^{r\times n}, B\in \R^{r\times m}$ are the learnable matrices, $\alpha$ is a scaling factor, and $r\ll \min(m,n)$ is the rank budget of weight adaptation $W$. Matrix rank is one way to quantify the ``size'' of a weight, corresponding the underlying dimensionality of its operation. But matrix norms -- such as nuclear, Frobenius, or spectral norms -- provide another notion of size, quantifying the magnitude of a matrix's elements and of its operation on vectors. 

Recent works show that it is beneficial to control the rank and norm of the weight adaption.  \citet{jang2024lora,kim2025lora} show that the global minimum of fine-tuning has low rank and small magnitude while spurious local minima (if they exist) have high rank and large magnitude. Moreover, bounding the magnitude of $W$ can enhance training robustness \citep{bini2025decoupling}. In \citet{hu2025computational}, LoRA training can achieve sub-quadratic time complexity under certain norm-bound conditions.

Motivated by those findings, we propose norm-bounded low-rank adaptation (NB-LoRA), a novel finetuning method that admits explicit bounds on both the rank \emph{and} norm of weight update through matrix reparameterization (see Fig.~\ref{fig:nblora-scheme}). Our approach can control a family of matrix norms, called Schatten $p$-norms (i.e. $p$-norms of the singular value sequence), which include the nuclear norm, Frobenius norm, and spectral norm as special cases. We summarize our contributions as follows. 

\begin{figure}[!t]
    \centering
    \includegraphics[width=0.9\linewidth]{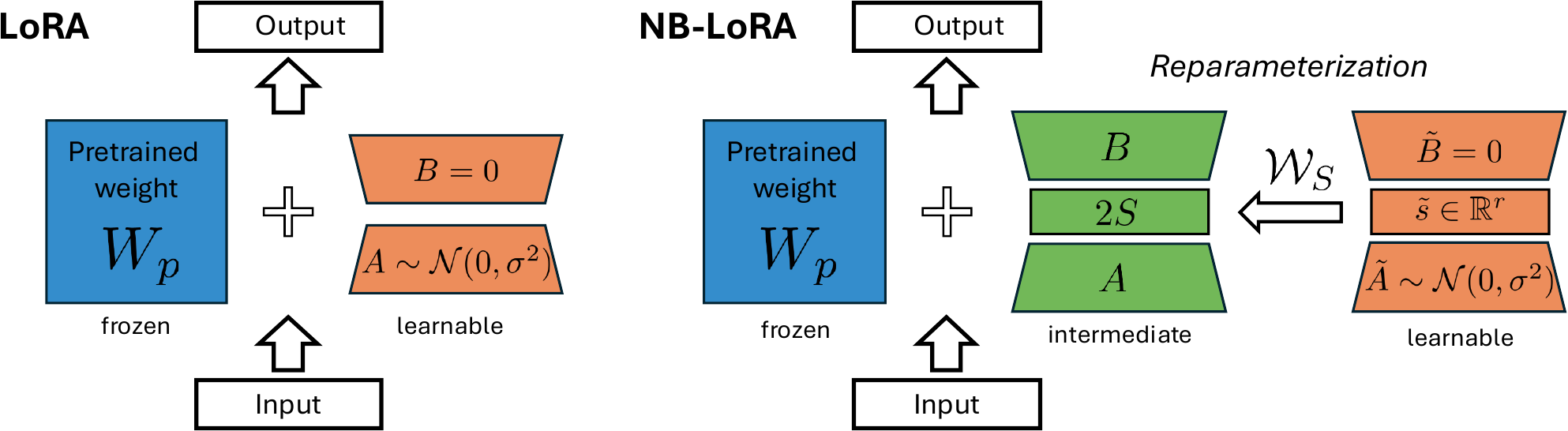}
    \caption{Visualization (Left) of the original LoRA \cite{hu2022lora} and (Right) of our proposed method NB-LoRA, where bounded rank and norm are enforced by reparameterization $\mathcal{W}_S$.} 
    \label{fig:nblora-scheme}
\end{figure}

\begin{itemize}
    \item Our parameterization is a smooth map $W=\mathcal{W}_S(\tilde A, \tilde B)$ which takes as argument two free matrix variables of the same size as $A, B$, but the resulting $W$ automatically satisfies user-prescribed bounds on both rank and all individual singular values of $W$, which further allows any Schatten $p$-norm bound on $W$ to be specified. 
    \item Our parameterization is {\em complete}, i.e., for any $W\in \R^{m\times n}$ satisfying the prescribed bounds on singular values, there exists a (not necessarily unique) $\tilde A, \tilde B$ such that $W=\mathcal W_S(\tilde A, \tilde B)$.
    \item LLM fine-tuning experiments  show that NB-LoRA can substantially improve training stability, overall performance and robustness to learning rates.
    \item Vision transformer fine-tuning experiments illustrate that NB-LoRA can achieve similar adaptation performance to LoRA and other existing methods while exhibiting less ``forgetting'' of the source model. Also, norm bounds appear to significantly reduce sensitivity to hyperparameter variation.
\end{itemize}

\section{Related Work}

LoRA can be highly sensitive to learning rate \citep{bini2024ether, biderman2024lora}, model initialization \citep{hayou2024impact}, and it is susceptible to over-training \citep{qiu2023controlling}.  To mitigate these effects, several recent works have proposed regularization techniques for LoRA. For example, \citet{gouk2021distance, chen2023parameter} propose an approach that preserves the Euclidean weight distances between pre-trained and fine-tuned models. In \citet{liu2024dora}, DoRA was proposed based on investigation of the vector-wise norm of the adaption matrix, and introduces an adaptive scaling of $W$. \citet{bini2025decoupling} proposed DeLoRA - a PEFT method that decouples the angular learning from adaptation strength. VeRA is another method which learns a scaling vector for LoRA weights \citep{kopiczko2024vera}. Our method also contains a learnable scaling vector, which can be used to explicitly control bounds on each singular value of the weight adaptation.

Another line of LoRA methods are closely related to singular value decomposition (SVD).  \citet{meng2024pissa} proposed a novel SVD-based LoRA initialization, called PiSSA, which can significantly speed up the training of LoRA. \citet{zhang2023adaptive} proposed a dynamical rank allocation scheme, called AdaLoRA, which adaptively update the rank bound in each LoRA layer. In \citet{lingam2024svft,balazy2024lora}, the singular vectors of pretrained weights are re-used and a small square matrices are learned during fine-tuning. No explicit control of norm bounds or constraint on singular values were considered in these methods.  

\section{Motivating Analysis of LoRA}\label{sec:lora}
In this section we provide some brief analyses of LoRA that motivate consideration of alternative parameterizations of $A$ and $B$.

\paragraph{Analysis of Gradients and Initialization.} For LoRA the standard approach is to initialize with one of $A$ or $B$ equal to zero, so that $W=0$ initially, and the other as a small random matrix to enable learning but avoid training instability (see \citet{hayou2024impact} for a discussion of approaches). Based on \cref{eq:lora}, we can obtain the gradients of the loss $\ell(\cdot)$ with respect to $A$ and $B$ as follows:
\begin{equation*}
    \frac{\partial \ell}{\partial A}=\frac{\alpha}{r}B \left(\frac{\partial \ell}{\partial y}\right)x^\top,\quad \frac{\partial \ell}{\partial B}=\frac{\alpha}{r}Ax\left(\frac{\partial \ell}{\partial y}\right)^\top.
\end{equation*}
The component $\partial \ell/\partial y$ is typically not large, since the pre-trained base model has reasonable generalization capability over a wide range of tasks. Thus, at the beginning of fine-tuning, if $B=0$ and $A$ is a small random matrix, then the gradient $\partial\ell/\partial A=0$ and $\partial\ell/\partial B$ is small and can be noisy. Therefore, both gradients could be small and uninformative for a large number of training steps, leading to slow convergence and poor performance since fine-tuning is often carried out for a few epochs. Large learning rate can help to speed up but it may cause training instability. The above phenomenon has been reported and analyzed in a recent work \citep{meng2024pissa}, which proposed PiSSA as an alternative initialization.

In this work, we provide a novel model reparameterization, and we note that this impacts training behavior since gradient descent is affected by changes of coordinates. In our approach, the low-rank matrices $A,B$ are constructed from new free variables $\tilde{A}, \tilde{B}$ which share the same size as $A$ and $B$, respectively. Under this reparameterization (detailed in Section \ref{sec:theory}), $A$ and $B$ cannot be both very small matrices. For example, if $B$ is a zero matrix, then by construction $A$ is a relatively large matrix, which in turn provides sufficiently large gradient for $A$ to move away from zero in a few steps. However, the Frobenius norms of $
A,B$ are guaranteed to be bounded, thus the gradients of $A,B$ are also bounded assuming that $x\left(\partial \ell/\partial y\right)^\top$ is bounded, and thus large gradient steps can be taken without training instability, i.e. we expect norm bounds to assist with robustness to learning rate. This informal argument is supported by experimental results in Section \ref{sec:llm}.

\paragraph{Analysis of Model Forgetting.} Let $\{(x_i^s, y_i^s)\}_{1\leq i\leq M}$ be the pretrained input-output pair of \eqref{eq:lora} under the source training dataset $\mathcal{D}_S$. After fine-tuning the adaption weight $W$ based on some target dataset $\mathcal{D}_T$, we can approximate the loss changes on $\mathcal{D}_S$ by 
\[
\ell_{\mathcal{D}_s}(W_p+W)-\ell_{\mathcal{D}_s}(W_p)\approx \frac{1}{M}\sum_{i=1}^{M}  \frac{\partial \ell}{\partial y_i^s} W x_i^s.  
\]
To prevent catastrophic forgetting we need to bound the left-hand side. 
Since $x_s$ and $\frac{\partial \ell}{\partial y_i^s}$ are fixed, we argue that constraining the norm of $W$ is a natural approach. 
If we have access to $\mathcal{D}_S$ during finetuning, one could incorporate source loss into the training to mitigate forgetting. However, $\mathcal{D}_S$ is often not available in fine-tuning applications.

\section{NB-LoRA}\label{sec:theory}
In this section we present our main contribution: a parameterization of low-rank matrices that admits bounds on each individual singular value, and hence on any unitarily invariant matrix norm.

\subsection{Preliminaries and Problem Formulation}

The problem we are interested in can be formalized as follows: 
\begin{equation} \label{eq:problem}
    \min \quad \ell(W)\quad \mathrm{s.t.}\quad \rank(W)\leq r,\; \|W\|_{S_p} \leq \delta 
\end{equation}
with $\ell$ as the training loss and $\|W\|_{S_p}=\bigl(\sum_{i=1}^r \sigma_i^p\bigr)^{1/p}$ for $p\in [1,\infty)$ and $\|W\|_{S_\infty}=\sigma_1$, where $\sigma_1\geq\sigma_2\geq  \cdots \geq \sigma_r\geq 0$ are the singular values of $W$. Since Schatten $p$-norm is the vector $p$-norm of the singular value sequence, it is unitarily invariant, i.e., $\|W\|_{S_p}=\|UWV\|_{S_p}$ for any orthogonal matrices $U,V$. 

We first define some notation. Since our approach involves comparing singular values of matrices of potentially different ranks and sizes, for convenience we define $\sigma_j(W)=0$ if $j>\mathrm{rank}(W)$. We now introduce the  relation $\preceq_\sigma$. 
\begin{definition}
    Let $X,Y$ be two matrices. We say $X \preceq_\sigma Y $ if $\sigma_j(X)\leq \sigma_j(Y),\, \forall j\in \mathbb{N}$. 
\end{definition}
Note the $\preceq_\sigma$ is reflexive ($X\preceq_\sigma X$) and transitive ($X\preceq_{\sigma} Y,\, Y\preceq_{\sigma} Z\Rightarrow X\preceq_{\sigma} Z$). But it is not  antisymmetric, i.e., $X\preceq_{\sigma} Y,\, Y\preceq_{\sigma} X\nRightarrow X= Y$, e.g., when $X,Y$ are distinct orthogonal matrices. Most importantly for our purposes: if $X\preceq_{\sigma} Y$, then $\|X\|_{S_p}\leq \|Y\|_{S_p}$ for all $p\in [1,\infty]$.

Let $s\in\R_{+}^{r}$, where $\R_{+}=[0,\infty)$, and  $S=\diag(s)$ be the diagonal matrix with $S_{jj}=s_j$. We define the set of matrices whose singular values are bounded by $S$ by
\begin{equation*}
    \sW_S:=\{ W\in \R^{m\times n}\mid  W\preceq_\sigma S\}.
\end{equation*}
Note that for any $W\in \sW_S$, we have $\rank(W)\leq \rank(S)=r$ and $\|W\|_{S_p}\leq \|S\|_{S_p}$.

\subsection{NB-LoRA Parameterization}
We now present so-called \textit{direct} parameterization of $\sW_S$, a smooth mapping $\mathcal{W}_S$ from free matrix variables to $W$ which maps onto the entire set $\sW_S$. Then, we can transform (\ref{eq:problem}) into an unconstrained problem by further parameterizing the positive diagonal matrix $S$ such that $\|S\|_{S_p}=\delta$. 

Our parameterization takes $\tilde{A}\in \R^{r\times n}, \tilde{B}\in \R^{r\times m}$ as the free parameters and produces $W$ via
\begin{equation}\label{eq:w-param}
    W=\mathcal{W}_S(\tilde A, \tilde B):=2B^\top S A \textrm{, \   where} \  \begin{bmatrix}
        A^\top \\ B^\top 
    \end{bmatrix}=\cayley\left(\begin{bmatrix}
        \tilde{A}^\top \\ \tilde{B}^\top
    \end{bmatrix}\right).
\end{equation}
Here the Cayley transformation for a tall matrix $\begin{bmatrix}
        X \\ Y
    \end{bmatrix}$ with $X\in \R^{r\times r}$ and $Y\in \R^{q\times r}$ is defined by
\begin{equation}\label{eq:cayley}
    \cayley\left(\begin{bmatrix}
        X \\ Y
    \end{bmatrix}\right):=\begin{bmatrix}
        (I-Z)(I+Z)^{-1} \\
        -2Y(I+Z)^{-1}
    \end{bmatrix}, \;\text{where } Z=X-X^\top+Y^\top Y.
\end{equation}
Note that $G=\cayley(F)$ is a semi-orthogonal matrix, i.e., $G^\top G=I$ for any tall matrix $F$ \citep{trockman2021orthogonalizing}, however it is not by itself a complete parameterization for the set of semi-orthogonal matrices, e.g., there does not exist an $F$ such that $\cayley(F)=-I$. Despite this, we have the following, which is the main theoretical result of the paper.

\begin{theorem}\label{thm:param}
    The NB-LoRA parameterization in (\ref{eq:w-param}) is a direct (smooth and complete) parameterization of $\sW_S$, i.e. $\mathcal{W}_S$ is differentiable and $\mathcal{W}_S(\R^{N})=\sW_S$.
\end{theorem}

\begin{remark}
A special case of the above theorem is $S=I$, which is a complete parameterization of all 1-Lipschitz linear layer, i.e. $f(x)=Wx$ with $\|W\|_{S_\infty}\leq 1$, see Proposition 3.3 of \citet{wang2023direct}.  One can further extend it to a nonlinear layer with low-rank and norm-bounded Jacobian. Specifically, we take a nonlinear layer of the form $f(x)=2B^\top D_1 \phi (D_2Ax)$ where $A,B$ are constructed from (\ref{eq:w-param}), $D_1, D_2$ are diagonal matrices satisfying $0\preceq D_1D_2\preceq S$ and $\phi$ is a scalar activation with slope-restricted in $[0,1]$. Then, we have $\partial f/\partial x\in \sW_S$ for all $x\in \R^n$. 
\end{remark}

\paragraph{Imposing the Norm Bound on $W$.} From \cref{thm:param}, if we construct a complete parameterization for the set of singular bound vector $s\in \R_{+}^r$ such that $\|s\|_p=\delta$, then the proposed NB-LoRA (\ref{eq:w-param}) covers all adaptation matrices $W$ satisfying the prescribed rank and norm bounds. For $p=\infty$, we simply take $s=(\delta, \delta, \ldots, \delta )$.  For $p\in [1,\infty)$, one approach is $s=\delta |\tilde{s}|/\|\tilde{s}\|_p$, where $\tilde{s}\in \R^r$ is a free non-zero vector. However, this parameterization is not smooth at $\tilde{s}=0$. Instead, we use the following parameterization in our experiments:
\begin{equation*}
    s=\delta  \left[\mathrm{Softmax}\left(\tilde{s}/\sqrt{r}\right)\right]^{1/p}.
\end{equation*}
Technically, the above parameterization omits some boundary cases with $\|W\|_{S_p}=\delta$ and $\sigma_r(W)=0$ since softmax has strictly positive outputs. However, since it covers the interior of the feasible set and can approximate the boundary, there is no practical impact on optimization performance.

\paragraph{Model Initialization and Gradient analysis.} Here we return to the motivating analysis from Section \ref{sec:lora} and show why NB-LoRA helps resolve the issue of small gradients. We can adapt the standard LoRA initialization to NB-LoRA's free parameters: sampling $\tilde{A}$ as a small random matrix and setting $\tilde{B}=0$. After applying the Cayley transformation, we have $AA^\top=I$ and $B=0$, yielding a zero initialization for  $W$. The gradients  of $A,B$ can be written as
\begin{equation*}
    \frac{\partial \ell}{\partial A}=2SB \left(\frac{\partial \ell}{\partial y}\right) x^\top=2\hat{B}\left(\frac{\partial \ell}{\partial y}\right)x^\top,\quad \frac{\partial \ell}{\partial B}=2SAx\left(\frac{\partial \ell}{\partial y}\right)^\top=2\hat{A}x\left(\frac{\partial \ell}{\partial y}\right)^\top
\end{equation*}
where $\hat{A},\hat{B}$ satisfy 
\begin{equation*}
    \hat{A}\hat{A}^\top +\hat{B}\hat{B}^\top =S(AA^\top+BB^\top)S= S^2
\end{equation*}
with $\|S\|_{S_p}=\delta>0$. Hence $\hat{A},\hat{B}$ cannot be both arbitrarily small matrices, implying that $\partial \ell/\partial A$ and $\partial \ell/\partial B$ cannot be both arbitrarily small initially. On the other hand, the Frobenius norms of $\hat{A},\hat{B}$ are also bounded by $c\delta$ where $c$ is the constant satisfying $\|S\|_{S_2}\leq c\|S\|_{S_p}$ for all $S$. Thus, if $x(\partial \ell/\partial y)^\top$ remains bounded, then $\partial\ell /\partial A$ and $\partial \ell/\partial B$ are bounded, allowing stable training for a wider range of learning rates than LoRA.

\paragraph{Computational cost of Cayley Transformation.} Due to the low-rank nature ($r$ is often less than 256), computing the inverse of an $r\times r$ matrix is not overly expensive. While matrix inversion is one part of the total training cost, another computationally intensive part is the backward pass for the Cayley transformation (\ref{eq:cayley}). We provide an efficient custom backward step in \cref{sec:cayley-backward}.

\paragraph{DeLoRA vs NB-LoRA.} Similarly to our method, DeLoRA \citep{bini2025decoupling} can also control the Frobenius norm bound of weight adaption based on the following parameterization:   
\begin{equation}\label{eq:delora-residual}
    W=\frac{\gamma}{r} B^\top \Xi A-\frac{\gamma_0}{r} B_0^\top \Xi_0 A_0
\end{equation}
where the scaling factor $\gamma$ and weight parameter $A, B$ are initialized as $\gamma_0$ and $A_0, B_0$. $\Xi$ is a diagonal matrix that normalizes each row of $A, B$. Similar to PiSSA \cite{meng2024pissa}, the second term in (\ref{eq:delora-residual}) can be absorbed into the pretrained weight $W_p$. Note that our method can control other Schatten norms (e.g. nuclear, spectral), which were not considered in \cite{bini2025decoupling}.

\begin{figure}[!t]
    \centering
    \includegraphics[width=0.8\linewidth]{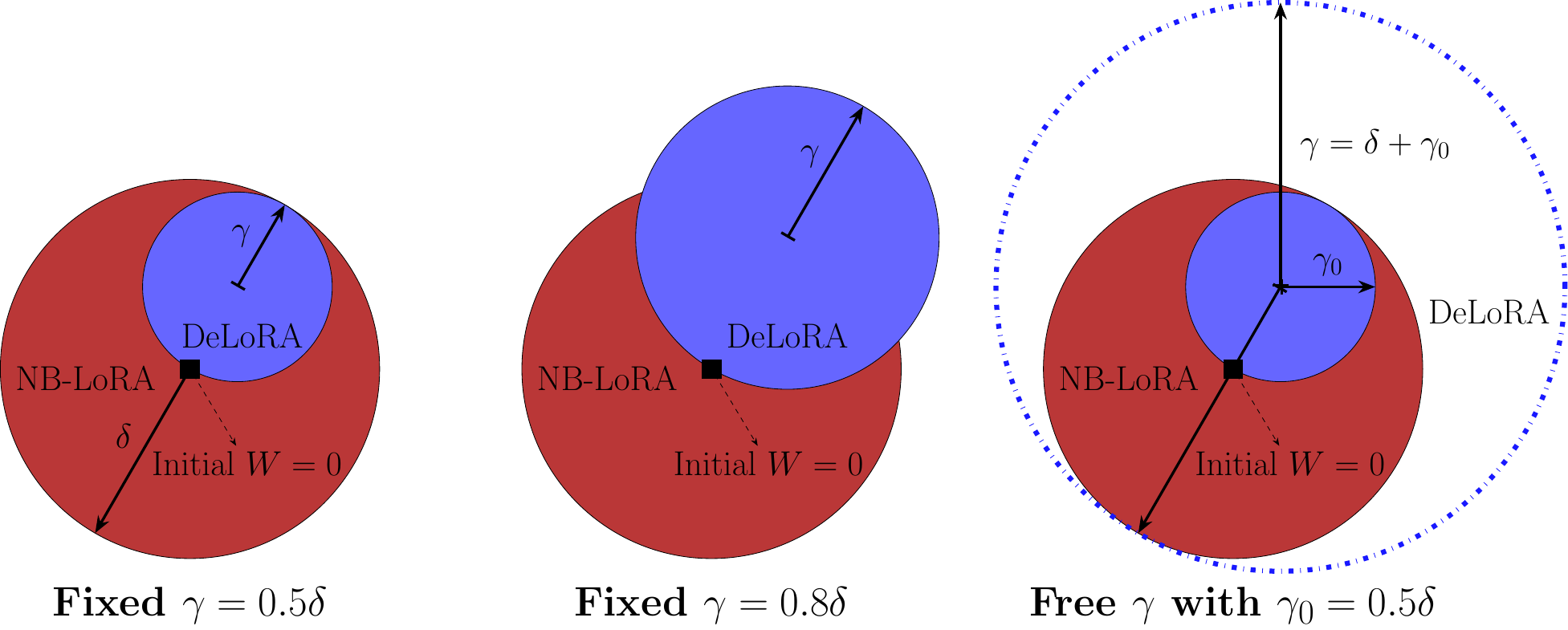}
    \caption{Visualization of the reachable sets $\sW_{\tt NBLoRA}$ (red) and $\sW_{\tt DeLoRA}$ (blue). (Left) With frozen scaling factor $\gamma=0.5\delta$, DeLoRA provides the same certified norm bound as NB-LoRA while $\sW_{\tt DeLoRA}$ is much smaller than $\sW_{\tt NBLoRA}$. (Middle) Further increasing the fixed $\gamma$ can enlarge $\sW_{\tt DeLoRA}$ but it does not cover $\sW_{\tt NBLoRA}$. (Right) DeLoRA can cover $\sW_{\tt NBLoRA}$ if $\gamma$ is free and sufficiently large, i.e., $\gamma\geq \delta+\gamma_0$. However, its norm bound $\delta+2\gamma_0$ is much larger than NB-LoRA.} 
    \label{fig:nblora-delora}
\end{figure}

As shown in \cref{sec:comparison}, both DeLoRA and NB-LoRA can be represented as sum of NB-LoRA matrices with both rank and norm bound of 1. The main difference comes from their reachable sets $ \sW_{\tt NBLoRA}$ and $ \sW_{\tt DeLoRA}$ when an explicit norm bound is specified. From \cref{thm:param} we have that $ \sW_{\tt NBLoRA}$ covers the feasible region of $W$ with norm bound of $\delta$. Moreover, the initial point $W=0$ of NB-LoRA lies at the center of the feasible region, allowing searching for all directions, see \cref{fig:nblora-delora}. Due to the residual type initialization, DeLoRA requires a fixed $\gamma=0.5\delta$ to ensure the same norm bound guarantee, see the left of \cref{fig:nblora-delora}. Since its initial $W$ lies at the boundary of $ \sW_{\tt DeLoRA}$, the searching directions of DeLoRA are constrained in certain ranges that depend on the random initial guess. Although these issues can be resolved by making $\gamma$ learnable, DeLoRA allows an unbounded Frobenius norm, see the right of \cref{fig:nblora-delora}.  

\paragraph{PiSSA vs NB-LoRA.} PiSSA \citep{meng2024pissa} addresses the small initial gradient issue of LoRA via a residual-type initialization, i.e., $W=\frac{\alpha}{r}(B^\top A - B_0^\top A_0)$ where $A_0,B_0$ are the initial values of $A, B$, respectively. Since the term $\frac{\alpha}{r}B_0^\top A_0$ can be absorbed into the pretrained weight $W_p$, it does not cause any extra computation cost compared with LoRA. Moreover, PiSSA has much lager initial gradients than LoRA by constructing $A_0,B_0$ from the reduced SVD of $W_p$. Thus, PiSSA can speed up the fine-tuning process, however, its performance might be sensitive to learning rate as $A, B$ are unbounded. Different from PiSSA, the proposed NB-LoRA approach address the small initial gradient issue through reparameterization. Since $A,B$ live on a compact manifold by construction,  thus NB-LoRA allows for a wide range choice of learning rates.

\paragraph{DoRA vs NB-LoRA.} DoRA \citep{liu2024dora} decouples angular and magnitude components of weight adaptation via $ W=\underline{m} \frac{(W_p+B^\top A)}{\|W_p+B^\top A\|_c}$ with $\|\cdot\|_C$ as the column-wise vector norm. Note that the normalization vector $\|W_p+B^\top A\|_c$ requires computing $B^\top A\in \R^{m\times n}$, whose forward computation time could be much larger than $r\times r$-matrix inverse, see \cref{tab:llm-BA-inv} of \cref{sec:llm-appendix}.  

\section{LLM Fine-Tuning Experiments}\label{sec:llm}

In this section, we evaluate our proposed NB-LoRA method for natural language generation (NLG) tasks. Our main objectives are as follows: i) NB-LoRA can avoid small initial gradients while still maintain training stability for a wide range of learning rates; ii) Controlling the norm is beneficial for robust performance; iii) Due to the ability of tight bound control, our method can outperform existing approaches with the same certified norm bound.

\paragraph{NLG Tasks.} We start with comparing LoRA, DoRA and PiSSA on NLG tasks. We fine-tuned the LLaMA model family \citep{touvron2023llama} and Mistral-7B-v0.1 \citep{jiang2023mistral7b} on the MetaMathQA dataset \citep{yu2023metamath} to evaluate their mathematical problem-solving capability on the GSM8K~\cite{cobbe2021gsm8k} and MATH~\cite{hendrycks2021measuring} test datasets. We also fine-tuned the models on the the CodeFeedback dataset~\cite{zheng2024opencodeinterpreter} and evaluated for coding proficiency using the HumanEval~\cite{chen2021evaluating} and MBPP~\cite{austin2021program}. We adopt the implementation strategy from \cite{alpaca}. We follow the training setup in \cite{meng2024pissa} with default rank budget of $r=128$ and scaling parameter $\alpha=r$ for LoRa and PiSSA. For the proposed NB-LoRA method, since the scaling components in $S$ of (\ref{eq:w-param}) are initialized close to $1/r$, we then choose the nuclear norm bound of $\delta=r$, leading to the same scaling factor as LoRA and PiSSA. More training details can be found in \cref{sec:llm-appendix}.

\paragraph{Large Initial Gradients and Training Stability.} The analysis in  \cref{sec:lora} suggested that norm bounds may improve robustness to learning rates while allowing for large initial gradients. We conducted experiments on LLaMA-2-7B fine-tuning across a wide range of learning rates from 5e-5 to 1e-3 on math and python coding datasets. \cref{fig:llm-lr} (a) and (b) show that LoRA and DoRA both suffer from poor performance with small learning rates, due to the small gradients problem. Increasing the learning rate helps up to a point but then training goes unstable. In contrast, NB-LoRA achieves good performance for a wide range of learning rates, outperforming all other models on most tasks. PiSSA outperforms NB-LoRA in terms of peak performance on GSM8K, but under-performs on other tasks and is more sensitive to learning rate.

\cref{fig:llm-lr} (c) and (d) show loss and gradient norm vs training steps. It can be seen that with a small learning rate (5e-5) NB-LoRA (and PiSSA) train similarly, both faster than LoRA and DoRA and with larger gradient norms. With a larger learning rate (1e-3) LoRA and DoRA were unstable, and NB-LoRA trains fastest. Note that PiSSA has a larger gradient norm but slower training: since the parameterizations are different the gradient norms are not directly comparable.

\paragraph{Hyperparameter Robustness.} \cref{{tab:llm-hp}} compiles the results of a comprehensive sweep across tasks, base models and learning rates, comparing NB-LoRA to LoRA, DoRA, and PiSSA in terms of their robustness to these variations (see table caption for details). While different methods were competitive for different particular scenarios, when averaging across models and tasks NB-LoRA is clearly superior.

\begin{wraptable}{R}{0.56\linewidth}
    \centering
    \vskip -0.16in
    \caption{\textbf{Scalability to larger models:} Comparison of LoRA, PiSSA and NB-LoRA on LLaMA-3-70B with learning rates from 5e-5 to 5e-4. }
    \label{tab:70b}
    \begin{center}
    \scalebox{0.95}{
    % \begin{sc}
    \setlength\tabcolsep{3pt}
    \begin{tabular}{c|ccc|cc}
        \toprule
        \midrule
        \multirow{2}{*}{Method} & \multicolumn{3}{c|}{Learning Rate} & \multicolumn{2}{c}{Computation} \\
         & 5e-5 & 1e-4 & 5e-4 & GPU Mem. & Train Time\\
        \midrule \midrule
        LoRA & 86.2 & 86.2 & failed & 65.57GB & 169m\\
        PiSSA & 83.6 & 79.0 & 41.8 & 65.57GB & 170m\\
        NB-LoRA & \textbf{87.1} & 85.4 & 83.3 & 69.15GB & 185m\\
        \bottomrule
    \end{tabular}
    % \end{sc}
    }
    \end{center}
    \vskip -0.1in
% \end{table}
\end{wraptable}
\paragraph{Scalability to Larger Models.} We compared NB-LoRA to LoRA and PiSSA on the LLaMA-3-70B model for GSM8K and compared them in terms of computational resources, accuracy, and learning-rate robustness. In \cref{tab:70b} it can be seen that NB-LoRA achieved the highest accuracy overall. It uniformly outperformed PiSSA, while standard LoRA achieved good performance for low learning rates but was unstable for larger learning rates. NB-LoRA required slightly more computational resources than LoRA and PiSSA: $\sim$6\% more memory and $\sim$9\% longer training time.

\paragraph{Comparison with DeLoRA.} \cref{fig:delora} compares NB-LoRA with DeLoRA \cite{bini2025decoupling} with  $\delta$ set to 10, 20, and free (see \cref{sec:theory} and \cref{fig:nblora-delora} for discussion). Firstly, in panel (a) we see that NB-LoRA achieves the highest test accuracy, outperforming DeLoRA with an equivalent norm bound $\delta=10$ by about 10\%. Secondly, in (b) we see that the NB-LoRA parameterization achieves very tight norm bounds, whereas with DeLoRA they are quite loose: with $\delta=10$, the observed Frobenius norm only reached around 2.5. With $\delta$ free, the Frobenius norm grew beyond the norm bound. Panel (c) shows that NB-LoRA achieves lower loss and higher gradient norm than DeLoRA. Note that the higher gradient norm explains the apparent ``offset'' in panel (a) w.r.t. learning rate.

\begin{figure}[!tb]
    \centering
    \begin{subfigure}[b]{0.48\textwidth}
        \centering
        \includegraphics[width=\linewidth]{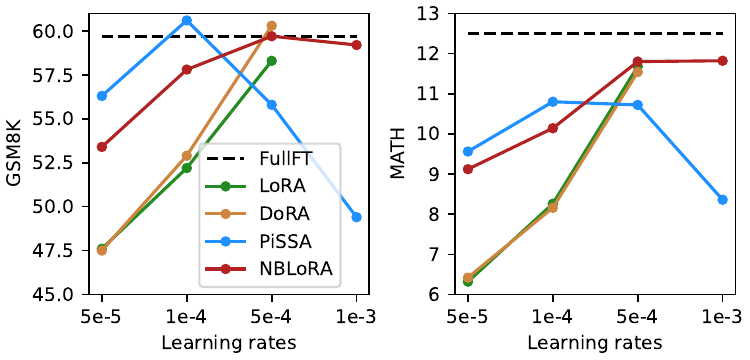}
        \caption{Test accuracy for math problem}
    \end{subfigure}
    ~
    \begin{subfigure}[b]{0.48\textwidth}
        \centering
        \includegraphics[width=\linewidth]{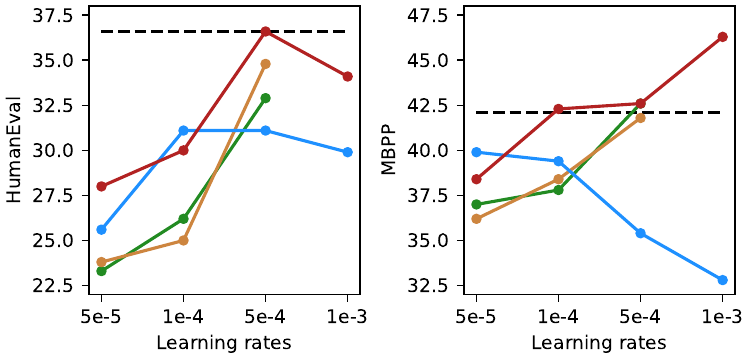}
        \caption{Test accuracy for python coding}
    \end{subfigure}
    ~
    \begin{subfigure}[b]{0.48\textwidth}
        \centering
        \includegraphics[width=\linewidth]{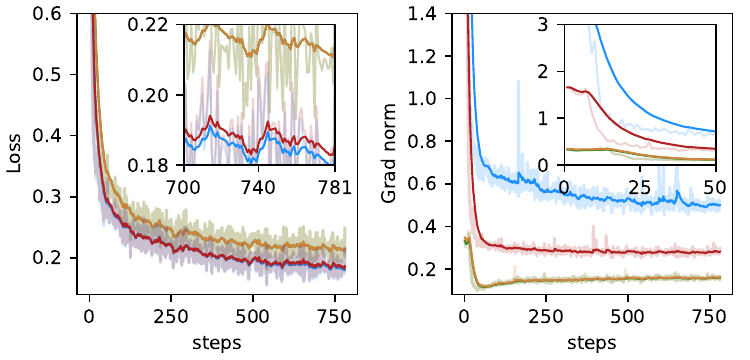}
        \caption{Learning rate of 5e-5}
    \end{subfigure}
    ~
    \begin{subfigure}[b]{0.48\textwidth}
        \centering
        \includegraphics[width=\linewidth]{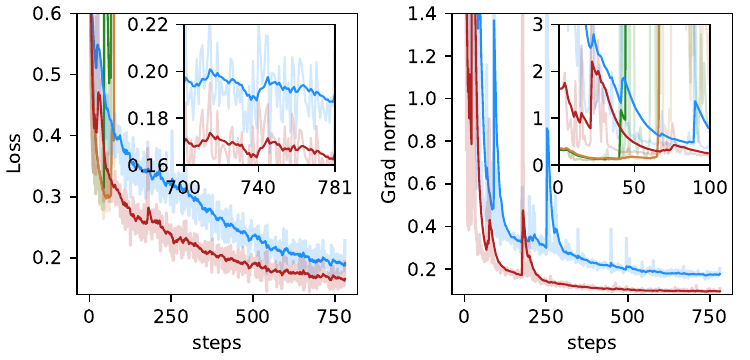}
        \caption{Learning rate of 1e-3}
    \end{subfigure}
    \caption{(Top) the evaluation accuracy over a range of learning rates and (Bottom) the loss and grad norm over the training steps for two learning rates.}
    \label{fig:llm-lr}
\end{figure}

\begin{table}[!tb]
    \centering
    \caption{Fine-tuning three base models based on LoRA (Lo), DoRA (Do), PiSSA (Pi) and NB-LoRA (NB) over different learning rates (\{1e-5, 5e-5,1e-4\} for Mistral and \{5e-5, 1e-4, 5e-4\} for LLaMA). We report the minimum, maximum and averaged test results, where the metrics for math and coding are $\frac{1}{2}(\mathrm{GSM8K}+\mathrm{MATH})$ and $\frac{1}{2}(\mathrm{HumanEval}+\mathrm{MBPP})$, respectively.}
    \label{tab:llm-hp}
    % \vskip 0.15in
    \begin{center}
    \scalebox{0.98}{
    % \begin{sc}
    \setlength\tabcolsep{3.5pt}
    \begin{tabular}{c|c|cccc|cccc|cccc||cccc}
        \toprule
        \midrule
         \multicolumn{2}{c|}{Base Model} &\multicolumn{4}{c|}{Mistral-7B-v0.1} &  \multicolumn{4}{c|}{LLaMA-3-8B} & \multicolumn{4}{c||}{LLaMA-2-13B} & \multicolumn{4}{c}{Model Avg.} \\
         \midrule \midrule
          \multicolumn{2}{c|}{Method} & Lo & Do & Pi & NB & Lo & Do & Pi & NB & Lo & Do & Pi & NB & Lo & Do & Pi & NB \\
          \midrule 
          \multirow{3}{*}{Math} 
          & min 
          & 43.8 & 44.2 & 45.9 & 46.7 
          & 49.6 & 50.4 & 41.6 & 48.5
          & 35.1 & 35.1 & 38.2 & 38.1
          & 42.8 & 43.2 & 41.9 & \textbf{44.4}
          \\
          & max 
          & 48.2 & 48.2 & 47.0 & 48.0
          & 51.5 & 51.8 & 52.0 & 52.9
          & 41.7 & 41.0 & 40.5 & 41.3
          & 47.1 & 47.0 & 46.5 & \textbf{47.4}
          \\
          & avg
          & 46.6 & 46.4 & 46.5 & 47.4
          & 50.7 & 51.3 & 48.2 & 51.2
          & 38.1 & 38.0 & 39.3 & 39.7
          & 45.1 & 45.2 & 44.7 & \textbf{46.1}
          \\
          \midrule
          \multirow{3}{*}{Code} 
            & min 
            & 52.4 & 53.7 & 55.6 & 57.8
            & 59.5 & 61.1 & 44.4 & 59.8
            & 42.5 & 42.5 & 44.4 & 44.0
            & 51.5 & 52.4 & 48.1 & \textbf{53.9}
            \\
            & max 
            & 58.3 & 59.2 & 59.0 & 59.7
            & 63.2 & 62.6 & 63.0 & 63.1
            & 46.6 & 45.9 & 45.6 & 48.8
            & 56.0 & 55.9 & 55.9 & \textbf{57.2}
            \\
            & avg 
            & 56.2 & 56.5 & 57.0 & 58.8
            & 61.6 & 61.8 & 55.0 & 61.9
            & 44.7 & 44.3 & 45.2 & 47.0
            & 54.2 & 54.2 & 52.4 & \textbf{55.9}
            \\
            \midrule \midrule
            \multirow{3}{*}{Task Avg.} 
            & min 
            & 48.1 & 49.0 & 50.8 & \textbf{52.2} 
            & 54.5 & \textbf{55.8} & 43.0 & 54.1 
            & 38.8 & 38.8 & \textbf{41.3} & 41.0
            & 47.1 & 47.8 & 45.0 & \textbf{49.1}
            \\
            & max 
            & 53.2 & 53.7 & 53.0 & \textbf{53.9} 
            & 57.4 & 57.2 & 57.5 & \textbf{58.0} 
            & 44.2 & 43.5 & 43.0 & \textbf{45.0}
            & 51.6 & 51.5 & 51.2 & \textbf{52.3}
            \\
            & avg 
            & 51.4 & 51.5 & 51.8 & \textbf{53.1} 
            & 56.2 & \textbf{56.5} & 51.6 & \textbf{56.5} 
            & 41.4 & 41.1 & 42.2 & \textbf{43.4}
            & 49.7 & 49.7 & 48.5 & \textbf{51.0}
            \\
        \bottomrule
    \end{tabular}
    % \end{sc}
    }
    \end{center}
    \vskip -0.1in
\end{table}

\begin{figure}[!tb]
    \centering
    \begin{subfigure}[b]{0.24\textwidth}
        \centering
        \includegraphics[width=\linewidth]{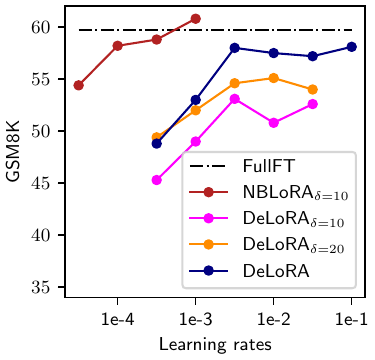}
        \caption{Test accuracy}
    \end{subfigure}
    ~
    \begin{subfigure}[b]{0.24\textwidth}
        \centering
        \includegraphics[width=\linewidth]{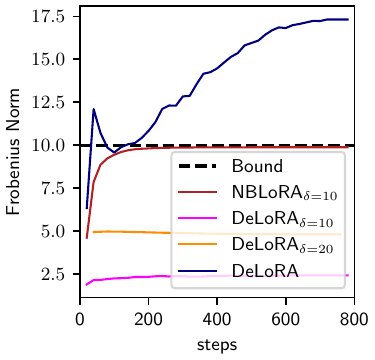}
        \caption{$\|W\|_{F}$ }
    \end{subfigure}
    ~
    \begin{subfigure}[b]{0.48\textwidth}
        \centering
        \includegraphics[width=\linewidth]{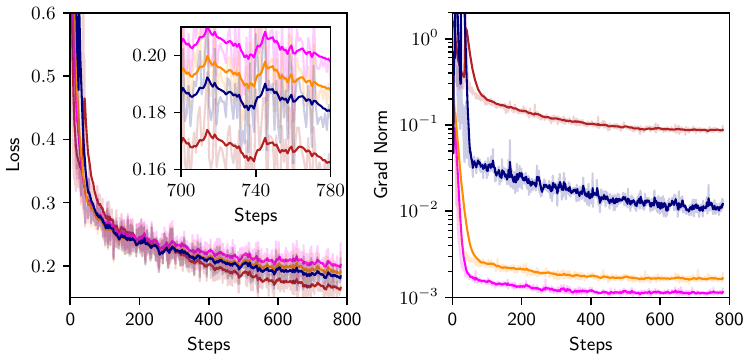}
        \caption{Loss and grad norm}
    \end{subfigure}
    \caption{(a) Comparison of DeLoRA and NB-LoRA over a range of learning rates; (b) Frobenius norm bound (maximized over all adaptation modules); (c) loss and grad norm vs training steps.}
    \label{fig:delora}
\end{figure}

\paragraph{Computation Cost.} \cref{tab:llm-computation} compares computational costs against two methods, PiSSA and DoRA, across different ranks on LLaMA 2-7B. Due to the extra reparameterization layer, NB-LoRA takes slightly more GPU memory and training time. NB-LoRA with the largest rank $r=256$ still takes less computational resources than DoRA with the smallest rank $r=2$. The main reason is that DoRA requires explicit calculation of the full adaptation matrix, which can be avoided with PiSSA and NB-LoRA. 

\begin{table}[!tb]
    \centering
    \caption{Computation comparison of DoRA, PiSSA and NB-LoRA with rank choice from 2 to 256. Experiments are conducted with 4 H200 GPUs. }
    \label{tab:llm-computation}
    % \vskip 0.15in
    \begin{center}
    \scalebox{0.98}{
    % \begin{sc}
    \setlength\tabcolsep{3.5pt}
    \begin{tabular}{c|c|cccccccc}
        \toprule
        \midrule
        \multicolumn{2}{c|}{Rank}  
        & 2 & 4 & 8 & 16 & 32 & 64 & 128 & 256 \\
        \midrule \midrule
        \multirow{3}{*}{Training Time} & DoRA 
        & 24m46s& 24m33s& 24m03s& 24m04s& 24m05s& 24m02s & 24m18s& 24m53s\\
        & PiSSA 
        & 17m44s& 17m35s& 17m09s& 17m10s& 17m12s& 17m15s& 17m32s& 18m09s\\
        & NB-LoRA 
        & 18m53s& 18m55s& 18m26s& 18m38s& 	18m51s& 19m13s& 20m15s& 22m40s\\
        \midrule
        \multirow{3}{*}{Peak GPU Mem. (GB)} & DoRA 
        & 102.41& 102.44& 102.51& 102.64& 102.90& 103.41&	104.47& 106.53\\
        & PiSSA 
        & 60.92& 60.96& 61.02& 61.15& 61.41& 61.93& 62.96& 65.04\\
        & NB-LoRA 
        &60.94	&60.99	&61.08	&61.28	&61.67	&62.45	&64.05	&67.19 \\
        \bottomrule
    \end{tabular}
    % \end{sc}
    }
    \end{center}
    \vskip -0.1in
\end{table}

\section{ViT Experiments}\label{sec:vit}

To further explore the potential benefits of NB-LoRA, we conducted experiments in fine-tuning a vision transformer (ViT) model. In particular, we explore adaptation performance to a target dataset vs forgetting of the source (pretraining) dataset as well as hyperparameter robustness. We compare to standard LoRA and several recently-proposed methods. 

\paragraph{Adaptation vs Forgetting}
The main goal of this experiment is to explore the utility of norm bounds in preventing catastrophic model forgetting \cite{mccloskey1989catastrophic,french1999catastrophic,wang2024comprehensive}. Our hypothesis is that tight control of the adaption norm will prevent loss of performance on the pre-trained model as per the analysis in \cref{sec:lora}, while still enabling good adaptation performance. We perform experiments \citep{bafghi2024parameter} on ViT-B/16 model \citep{dosovitskiy2020image}, which is pre-trained on ImageNet-21k \citep{deng2009imagenet} and then fine-tuned to ImageNet-1k. For the proposed NB-LoRA, we choose the norm bound as $\delta =\gamma \|W_p\|_{S_p}$, where the ratio $\gamma$ is a hyper-parameter. Similar to the setup in \cite{kopiczko2024elora}, we adapt $Q,V$ matrices and learn the classification head for the Street View House Number (SVHN) dataset \citep{Netzer2011}. Here we report the results for NB-LoRA using nuclear norm with bound ratio of $\gamma$ between 0.1 and 1.6, see \cref{sec:vit_appendix} for additional results with different setups and datasets including  CIFAR-100 \cite{krizhevsky2009learning} and Food-101 \cite{bossard2014food}. 

\begin{figure}[!tb]
    \centering
    \includegraphics[width=0.95\linewidth]{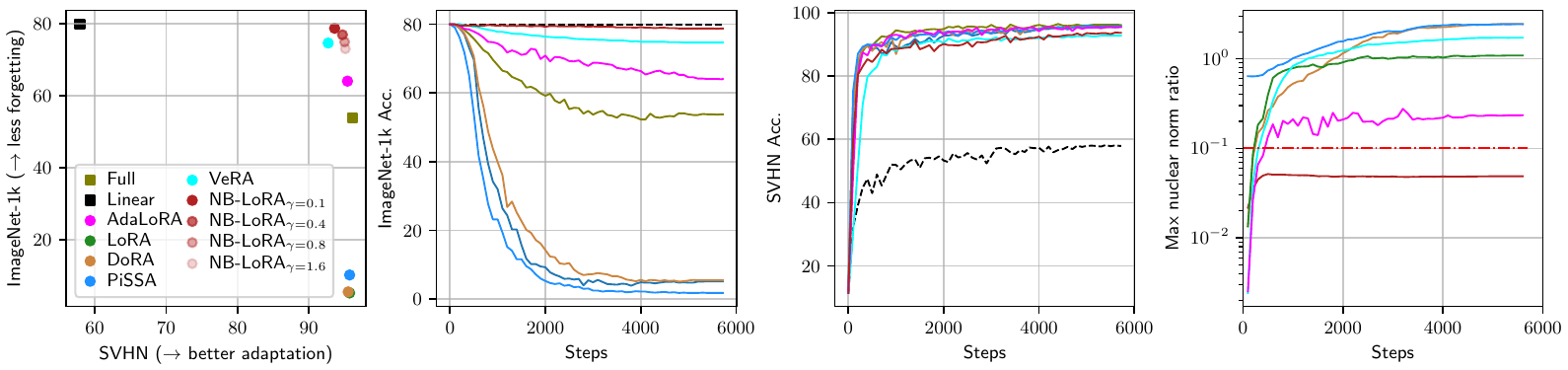}
    \caption{Analysis of Adaptation to a target data set vs forgetting of a source dataset. Left: final results for a variety of methods. Middle-Left: forgetting of the source dataset (ImageNet-1k). Middle-Right: adaptation to the target dataset (SVHN). Right: maximum nuclear norm ratio observed during training.}
    \label{fig:svhn}
\end{figure}

The \textbf{metric for model forgetting} is the test accuracy of the fine-tuned model on the source dataset: ImageNet-1k, which can be compared against performance on the target dataset. As shown in \cref{fig:svhn}(Left), the linear adapter (i.e. just learning the classification head) avoids forgetting of the source but has poor performance on the target set. In contrast, LoRA, DoRA and PiSSA achieve high adaptation performance to the target data set, but with a dramatic loss of performance on the source data set (from around 80\% to less than 10\%). AdaLoRA achieves good target adaptation with less severe but still significant forgetting. VeRA and NB-LoRA can both achieve a good balance of both, but NB-LoRA outperforms in terms of both source and target performance. It can also be seen that tuning of $\gamma$ allows a trade-off between source and target performance.

The middle panels of \cref{fig:svhn} show the evolution of source and target accuracy vs training steps. All models (except linear) perform quite similarly in terms of adaptation to the target, whereas on the source dataset NB-LoRA (shown with $\gamma=0.1$) maintains high accuracy throughout training, while most other methods quickly forget source performance. For PiSSA, DoRA, and LoRA the source performance drops significantly before target accuracy has converged, so early stopping can not solve the problem. \cref{fig:svhn} (Right) plots the maximum nuclear norm ratio of the models. NB-LoRA remains below the bound, while several others are more than an order of magnitude larger.

\begin{figure}[!tb]
    \centering
    \includegraphics[width=0.75\linewidth]{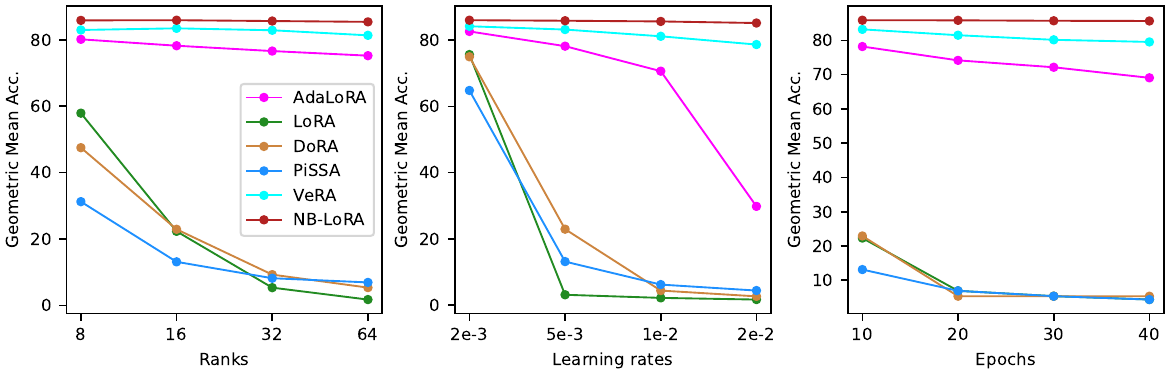}
    \caption{Analysis of hyperparameter robustness of different methods in terms of geometric mean of source (ImageNet-1k) and target (SVHN) dataset accuracies.}
    \label{fig:svhn-hp}
\end{figure}

\cref{fig:svhn-hp} shows an analysis of hyperparameter robustness of the different models, in terms of geometric mean of source and target accuracy. It can be seen that NB-LoRA is almost invariant with respect to these hyper-parameters, while most other methods (except VeRA) are highly sensitive to some or all of them.

\section{Conclusion}
In this paper we propose a norm-bounded low-rank adaptation (NB-LoRA) for model fine tuning. In particular, we introduce a new parameterization which is smooth and complete, i.e. it covers all matrices of a specified rank and singular value bounds, which can then be used to impose a Schatten $p$-norm bound (e.g. Frobenius, nuclear, spectral norm).

We argue that the proposed parameterization mitigates addresses some challenges related to the initialization of LoRA and its impact on learning rate, and can also mitigate the tendency of LoRA to forget source model performance. In experiments on fine tuning of language models, we compare to standard LoRA and other existing methods and demonstrate that NB-LoRA can substantially improve overall performance and robustness to learning rate. We showed that NB-LoRA is scalable to larger models (LLaMa-3-70B) with only a moderate computational penalty relative to standard LoRA.

\bibliographystyle{unsrtnat}
\bibliography{references}  %%% Uncomment this line and comment out the ``thebibliography'' section below to use the external .bib file (using bibtex) .

\newpage
\appendix
\section{Key Technical Lemmas}\label{sec:proof}

Here we present some key lemmas which are used in our proofs later.
\begin{lemma}\label{lem:uv}
    For any $Q\in \R^{n\times n}$, there exists a diagonal matrix $P$ with $P_{jj}\in \{-1,1\}$ such that $I+PQ^\top$ is invertible.
\end{lemma}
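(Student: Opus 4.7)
The plan is to translate invertibility of $I+PQ^\top$ into nonvanishing of a multilinear polynomial in the sign variables $p_1,\ldots,p_r$, and then invoke a standard unisolvence fact on the Boolean cube.

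First I would exploit $P^2=I$ (since every diagonal entry of $P$ squares to $1$) to factor $I+PQ^\top=P(P+Q^\top)$, so that $\det(I+PQ^\top)=\det(P)\det(P+Q^\top)=\pm\det(P+Q^\top)$. Hence it suffices to find $p\in\{\pm 1\}^r$ with $\det(\diag(p)+Q^\top)\neq 0$.

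Next I would introduce the polynomial $f(x_1,\ldots,x_r):=\det(\diag(x)+Q^\top)$ and use the Leibniz formula to show $f$ is multilinear with coefficient $1$ on the monomial $x_1 x_2\cdots x_r$. Only the identity permutation can contribute a degree-$r$ term: for any $\sigma\neq\mathrm{id}$ there is an $i$ with $\sigma(i)\neq i$, so the corresponding Leibniz factor $Q^\top_{i,\sigma(i)}$ is a constant and the whole Leibniz term has total degree at most $r-1$. The identity permutation contributes $\prod_{i=1}^r(x_i+Q^\top_{ii})$, which contains $x_1\cdots x_r$ with coefficient exactly $1$. Thus $f$ is a nonzero multilinear polynomial in $r$ variables.

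To close, I would invoke the fact that any multilinear polynomial in $r$ variables that vanishes on all of $\{\pm 1\}^r$ is identically zero: the $2^r$ Walsh characters $\chi_S(p)=\prod_{j\in S}p_j$ form an orthogonal basis of real-valued functions on the cube, matching the $2^r$ monomials of a multilinear polynomial. Since our $f$ is not identically zero, some $p^\star\in\{\pm 1\}^r$ satisfies $f(p^\star)\neq 0$, and $P=\diag(p^\star)$ is the required matrix. The only real content is this last unisolvence claim; if a self-contained route is preferred, one can instead induct on $r$ by expanding $\det(\diag(p)+Q^\top)$ as an affine function $p_r M_{rr}+c$ of $p_r$, using the inductive hypothesis on the $(r-1)\times(r-1)$ principal submatrix of $Q^\top$ to make the cofactor $M_{rr}\neq 0$, and then choosing $p_r\in\{\pm 1\}$ to avoid the single root of the resulting affine function.
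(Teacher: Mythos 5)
Your proof is correct, but it takes a genuinely different route from the paper's. The paper argues greedily and algorithmically: starting from $A_1=I$, it adds the rank-one updates $s_k e_k q_k^\top$ one column at a time, choosing each sign $s_k$ so that the Sherman--Morrison denominator $1+s_kq_k^\top A_k^{-1}e_k$ is at least $1$; invertibility is preserved at every step and the procedure terminates with $A_{n+1}=I+PQ^\top$ invertible, having computed its inverse along the way. You instead reduce to a nonvanishing statement for the multilinear polynomial $f(x)=\det\bigl(\diag(x)+Q^\top\bigr)$ on the cube $\{\pm1\}^r$, after first factoring $I+PQ^\top=P(P+Q^\top)$ via $P^2=I$; the Leibniz expansion shows the coefficient of $x_1\cdots x_r$ is $1$, and a nonzero multilinear polynomial cannot vanish on all of $\{\pm1\}^r$. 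All steps check out, and your inductive fallback (fix $p_1,\dots,p_{r-1}$ to make the $(r,r)$ cofactor nonzero, then choose $p_r$ to dodge the single root of an affine function) is a clean self-contained substitute for the Walsh-basis unisolvence fact. The trade-off: your argument is shorter and purely existential, while the paper's yields an explicit $O(r)$-step construction of $P$ together with $(I+PQ^\top)^{-1}$, which is arguably more useful in the context of inverting the Cayley transformation. A minor simplification available to you: the factorization through $P(P+Q^\top)$ is not strictly needed, since $g(x)=\det\bigl(I+\diag(x)Q^\top\bigr)$ is already multilinear with constant term $g(0)=1\neq0$, so the same unisolvence argument applies directly (this also sidesteps any worry about $\det Q=0$, which your factored form handles via the top coefficient instead).
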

\begin{proof}
    Let $e_k, q_k$ be the $k$th column of $I$ and $Q$, respectively. We construct $A_k$ via
    \begin{equation}
        A_{k}^{-1}=A_{k-1}^{-1}-\frac{s_kA_{k-1}^{-1}e_k q_k^\top A_{k-1}^{-1}}{1+s_kq_k^\top A_{k-1}^{-1}e_k},
    \end{equation}
    where $A_0=I$ and $s_k=\mathrm{sign}\bigl(v_k^\top A_{k-1}^{-1}e_k\bigr)$ with $\mathrm{sign}(0)=1$. From Sherman-Morrison formula, $A_{k}$ is well-defined (i.e., invertible) and satisfies $A_{k}=A_{k-1}+ s_ke_k q_k^\top$. By taking $P=\mathrm{diag}(s_1,\ldots,s_n)$, we have $A_{n}= I +\sum_{k=1}^{n}s_k e_kq_k^\top =I+PQ^\top$ is also invertible. 
\end{proof}

\begin{lemma}\label{lem:cayley}
    Let $G\in \R^{r\times r}$ and $H\in \R^{s\times r}$ such that $G^\top G+H^\top H=I$. Then, 
    \begin{equation}\label{eq:cayley-appendix}
        \begin{bmatrix}
        G \\ H
    \end{bmatrix}=\cayley\left(\begin{bmatrix}
        X \\ Y
    \end{bmatrix}\right)=\begin{bmatrix}
        (I-Z)(I+Z)^{-1} \\ -2Y(I+Z)^{-1}
    \end{bmatrix}
    \end{equation}
    for some $X\in \R^{r\times r}$ and $Y\in \R^{s\times r}$ if and only if $I+G$ is invertible.
\end{lemma}
\begin{proof}
    From the Cayley transformation (\ref{eq:cayley}) we have the following relationships: 
    \begin{equation}\label{eq:xyz}
        G=(I-Z)(I+Z)^{-1},\quad H=-2Y(I+Z)^{-1},\quad Z=X-X^\top + Y^\top Y.
    \end{equation}

    (\textbf{if}). From the above equation we have $I+G=(I+Z)^{-1}$ invertible.

    (\textbf{only if}). The proof is constructive, i.e., finding $X, Z\in \R^{r\times r}$ and $ Y\in \R^{s\times r}$ satisfying (\ref{eq:xyz}). We consider a candidate solution as follows:
    \begin{equation}\label{eq:cayley-inverse}
        Z=(I+G)^{-1}(I-G),\quad Y=-\frac{1}{2}H(I+Z),\quad X=\frac{1}{2}Z.
    \end{equation}
    It is easy to check that the above solution satisfies the first two equations in (\ref{eq:xyz}). We now verify the last equation as follows:
    \begin{equation*}
        \begin{split}
            &Z+X^\top -X - Y^\top Y=\frac{1}{2}(Z+Z^\top)-Y^\top Y \\
            =&\frac{1}{2}[(I+G)^{-1}(I-G)+(I-G^\top)(I+G^\top)^{-1}]- (I+G^\top)^{-1}H^\top H(I+G)^{-1} \\
            =&\frac{1}{2}[(I-G)(I+G)^{-1}+(I+G^\top)^{-1}(I-G^\top)]- (I+G^\top)^{-1}H^\top H(I+G)^{-1}\\
            = & \frac{1}{2}(I+G^\top)^{-1}[(I+G^\top)(I-G)+(I-G^\top)(I+G)-2H^\top H](I+G)^{-1} \\
            =&(I+G^\top)^{-1}[I-G^\top G-H^\top H](I+G)^{-1}=0,
        \end{split}
    \end{equation*}
    where the second line is due to that $(I+G)^{-1}$ and $(I-G)$ are commutative.  
\end{proof}

\begin{lemma}\label{lem:cayley-inverse}
    Let $A\in \R^{r\times m}$ and $B\in \R^{r\times n}$ with $AA^\top + BB^\top = I$. Then, there exist a diagonal matrix $P\in \R^{r\times r}$ with $P_{jj}\in \{-1,1\}$ and $\tilde{A}\in \R^{r\times m},\tilde{B}\in \R^{r\times n}$ satisfying
    \begin{equation}
        \begin{bmatrix}
            PA & PB
        \end{bmatrix}^\top =\cayley\left(\begin{bmatrix}
            \tilde{A} & \tilde{B}
        \end{bmatrix}^\top\right).
    \end{equation}
\end{lemma}
\begin{proof}
    From the assumption we have that $\begin{bmatrix}
        A^\top \\ B^\top
    \end{bmatrix}$ is a tall matrix, i.e., $r\leq m+n$. We then take the partition $\begin{bmatrix}
        A^\top \\ B^\top
    \end{bmatrix}=\begin{bmatrix}
        \bar{G} \\ \bar{H}
    \end{bmatrix}$ with $\bar{G}\in \R^{r\times r}$ and $\bar{H}\in \R^{(m+n-r)\times r}$. We introduce $G=\bar{G}P$ and $H=\bar{H}P$, where $P$ is a diagonal matrix with $P_{jj}\in \{-1,1\}$. Then, we can obtain
    \[
    G^\top G+H^\top H=P(\bar{G}^\top \bar{G}+\bar{H}^\top\bar{H})P=P(AA^\top +BB^\top)P=P^2=I
    \]
    for all diagonal such $P$. From \cref{lem:uv}, we can pick a particular $P$ such that $I+G= I+ \bar{G}P$ is invertible. We then follow \cref{lem:cayley} to compute $X\in \R^{r\times r}$ and $Y\in \R{(m+n-r)\times r}$ satisfying (\ref{eq:cayley-appendix}). Finally, we take the partition $\begin{bmatrix}
        X^\top & Y^\top
    \end{bmatrix}=\begin{bmatrix}
        \tilde{A} & \tilde{B}
    \end{bmatrix}$.
\end{proof}

\section{Proof of \cref{thm:param}}\label{sec:thm-proof}

\begin{figure}[!ht]
    \centering
    \includegraphics[width=0.98\linewidth]{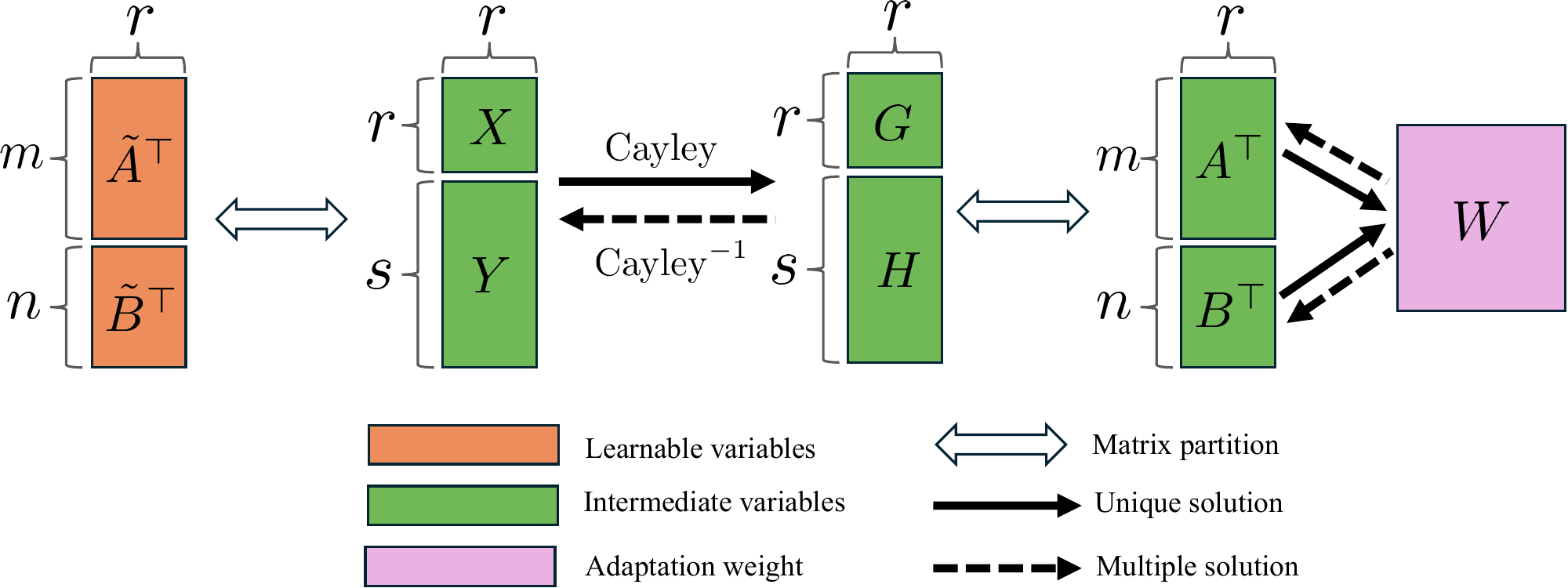}
    \caption{Diagram of NB-LoRA parameterization. }
    \label{fig:diagram}
\end{figure}
The proof includes two parts: I) $W=\gW(\tilde{A},\tilde{B})\in \sW_{S}$
for any $\tilde{A}\in \R^{r\times m}$ and $\tilde{B}\in \R^{r\times n}$; II) for any $W\in \sW_{S}$, there exists a pair of $\tilde{A}\in \R^{r\times m}$ and $\tilde{B}\in \R^{r\times n}$ such that $W=\gW(\tilde{A},\tilde{B})$. 

\paragraph{Part I.} It is obvious that $\rank(W)\leq r$. The $j$th singular value of $W$ satisfies
\begin{equation}\label{eq:magm}
\sigma_j( W)=2\sigma_j(\underset{Q^\top}{\underbrace{B^\top S^{\frac{1}{2}}}}\underset{K}{\underbrace{S^{\frac{1}{2}}A}})\leq \sigma_j\left(QQ^\top + KK^\top\right)=\sigma_j(S^{\frac{1}{2}}(\underset{I}{\underbrace{AA^\top +BB^\top}})S^{\frac{1}{2}})=\sigma_j(S)
\end{equation} 
where the inequality is the matrix arithmetic-geometric mean inequality \citep{bhatia1990singular,bhatia2013matrix}, and the last equality follows by the Cayley transformation.

\paragraph{Part II.} Without loss of generality, we assume that the diagonal elements of $S$ is in descending order, i.e., $\sigma_j(S)=S_{jj}$ for $j=1,\ldots,r$. Since $W$ has maximally $r$ non-zero singular values, we can take the reduced SVD decomposition $W=U_w\Sigma_w V_w^\top$ where $U_w\in \R^{m\times r}, V_w\in \R^{n\times r}$ are semi-orthogonal, and the positive diagonal matrix $S_w\in \R^{r\times r}$. We now consider the following candidates for $A,B$:
\begin{equation}\label{eq:ab-candidate}
    A=P\Sigma_a V_w^\top,\quad B=P\Sigma_b U_w^\top,
\end{equation}
where $P\in \R^{r\times r}$ is a diagonal matrix with $P_{jj}\in \{-1,1\}$, and $\Sigma_a, \Sigma_b\in \R^{r\times r}$ are positive diagonal matrices. The first constraint for $A$ and $B$ is that $\begin{bmatrix}
    A & B
\end{bmatrix}^\top$ is semi-orthogonal since it is an output of the Cayley transformation. Thus, we have
\begin{equation}\label{eq:square-ab}
    I=AA^\top+BB^\top = P(\Sigma_a^2+\Sigma_b^2)P^\top \;\Longrightarrow \;\Sigma_a^2+\Sigma_b^2=I.
\end{equation}
The second constraint for $A,B$ is $W=2B^\top S A$, which implies
\begin{equation}\label{eq:multi-ab}
    U_w\Sigma_w V_w^\top=2U_w\Sigma_aP^\top SP \Sigma_b V_w^\top=U_w(2\Sigma_a\Sigma_bS)V_w^\top\;\Longrightarrow\; 2\Sigma_a \Sigma_b =\Sigma_wS^{-1}
\end{equation}
Eq.~(\ref{eq:square-ab}) and (\ref{eq:multi-ab}) yield a solution of
\begin{equation}\label{eq:sigma-ab}
    \Sigma_a=\frac{\sqrt{I+J}+\sqrt{I-J}}{2},\; \Sigma_b=\frac{\sqrt{I+J}-\sqrt{I-J}}{2}.
\end{equation} 
where $J=\Sigma_w S^{-1}$ satisfies $0\preceq J \preceq I$ since  $S_w\preceq S$ for $W\in \sW_S$. Note that we need to deal with the case where $S$ is not full rank, i.e., there exists an $k< r$ such that $S_{kk}=0$ and $S_{k-1,k-1}>0$. Since $0\preceq \Sigma_w\preceq S$, we have $\Sigma_{ii} =0$ for all $i\geq k$ and simply take $J_{ii}=1$. It is easy to verify that Equations~(\ref{eq:square-ab}) - (\ref{eq:sigma-ab}) still hold. Finally, \cref{lem:cayley-inverse} shows that we can recover $\tilde{A},\tilde{B}$ from $A,B$ by picking a proper $P$ in (\ref{eq:ab-candidate}) based on \cref{lem:uv}.

\section{Custom Backward for Cayley Transformation}\label{sec:cayley-backward}

We first rewrite the forward computation of Cayley transformation $(X,Y)\rightarrow (G, H)$ as follows:
\begin{equation}\label{eq:cayley-forward}
    Z=X-X^\top+Y^\top Y,\quad  M=I+Z,\quad W=M^{-1},\quad G=(I-Z)W,\quad H=-2YW
\end{equation}
where $G,X,Z,M,W\in \R^{r\times r}$ and $H,Y\in \R^{s\times r}$. We provide a custom backward $(\nabla_G,\nabla_H)\rightarrow(\nabla_X,\nabla_Y)$ with $\nabla_{A}=(\partial \ell/\partial A)^\top$ for (\ref{eq:cayley-forward}) as follows:
\begin{equation}\label{eq:cayley-backward}
    \begin{split}
        \begin{bmatrix}
    \tilde\nabla_G \\ \tilde\nabla_H
    \end{bmatrix}&=
    \begin{bmatrix}
    \nabla_G \\ \nabla_H
    \end{bmatrix} W^\top,\quad S_Z=
    \begin{bmatrix}
    I+G \\ H
    \end{bmatrix}^\top
    \begin{bmatrix}
    \tilde\nabla_G \\ \tilde\nabla_H
    \end{bmatrix},\\
    \nabla_X&=S_Z^\top-S_Z,\quad
    \nabla_Y=-\frac{1}{2}HM(S_Z^\top+S_Z)-2\tilde\nabla_H,
    \end{split}
\end{equation}
where $W,M,G,H$ can be reused from the Cayley forward step. Note that when applying AutoDiff to (\ref{eq:cayley-forward}), it is necessary to store the input $X,Y$, output $G,H$ as well as some intermediate steps, which requires more memory than our custom backward step (\ref{eq:cayley-backward}) since $Y\in\mathbb{R}^{s\times r}$ is much larger than $W,M\in\mathbb{R}^{r\times r}$. In our approach, we can recover $Y$ from other stored variables, i.e., $Y=-\frac{1}{2}HM$. To give detail derivation for (\ref{eq:cayley-backward}), we first differentiate the forward step (\ref{eq:cayley-forward}):
\begin{equation}\label{eq:cayley-diff}
    \begin{split}
        \diff Z &= \diff X -\diff X^\top + Y^\top \diff Y + \diff Y^\top Y,\quad 
        \diff W= -W \diff Z W, \\
        \diff G &= -\diff Z W +(I-Z)\diff W, \quad
        \diff H =-2\diff Y W -2Y \diff W.
    \end{split}
\end{equation}
The differential of loss function $\diff \ell$ satisfies
\begin{equation}\label{eq:loss-diff}
    \diff \ell= \Tr\left(\nabla_X^\top \diff X\right)+\Tr\left(\nabla_Y^\top \diff Y\right) = \Tr\left(\nabla_G^\top \diff G\right)+\Tr\bigl(\nabla_H^\top \diff H\bigr).
\end{equation}
By further substituting (\ref{eq:cayley-diff}) into (\ref{eq:loss-diff}), we have
\begin{equation*}
    \begin{split}
    \Tr\bigl(&\nabla_G^\top \diff G\bigr)+\Tr\bigl(\nabla_H^\top \diff H\bigr) \\
        &=-\Tr\bigl(\nabla_G^\top \diff Z W +\nabla_G^\top (I-Z)W\diff Z W\bigr) -2 \Tr\bigl(\nabla_H^\top \diff Y W - \nabla_H^\top Y W \diff Z W\bigr) \\
        &= -\Tr\bigl(W(\nabla_G^\top + \nabla_G^\top (I-Z)W -2\nabla_H^\top YW)\diff Z\bigr)-\Tr\bigl(2W\nabla_H^\top \diff Y\bigr)\\
        &= -\Tr\bigl(W(\nabla_G^\top + \nabla_G^\top G+ \nabla_H^\top H) \diff Z\bigr)-\Tr\bigl(2W\nabla_H^\top \diff Y\bigr)\\
        &=-\Tr\bigl(S_Z^\top \diff Z\bigr)-\Tr\bigl(2W\nabla_H^\top \diff Y\bigr)\\
        &=-\Tr\bigl((S_Z^\top-S_Z)dX\bigr)-\Tr\bigl(((S_Z+S_Z^\top)Y^\top+2W\nabla_H^\top)\diff Y\bigr)=\Tr\left(\nabla_X^\top \diff X\right)+\Tr\left(\nabla_Y^\top \diff Y\right)
    \end{split}
\end{equation*}
which yields the custom backward step (\ref{eq:cayley-backward}) by substituting $Y=-\frac{1}{2}HM$. As shown in \cref{tab:llm-custom-backward}, the custom backward pass can save both GPU memory and training time.

\begin{table}[!tb]
    \centering
    \caption{Computation comparison for training NB-LoRA with AutoDiff and custom backward step.}
    \label{tab:llm-custom-backward}
    % \vskip 0.15in
    \begin{center}
    \scalebox{0.9}{
    % \begin{sc}
    \setlength\tabcolsep{3.5pt}
    \begin{tabular}{cccc}
        \toprule
        \midrule
        Method & Peak Mem. & Train Time & GSM8K Acc. \\
        \midrule \midrule
        AutoDiff & 69.52GB & 23m51s & 58.0\\
        Custom & 67.19GB & 22m40s & 57.8 \\
        \bottomrule
    \end{tabular}
    % \end{sc}
    }
    \end{center}
    \vskip -0.1in
\end{table}

\section{Connections between DeLoRA and NB-LoRA}\label{sec:comparison}

DeLoRA \citep{bini2025decoupling} is a fine-tuning method which can control both rank and Frobenius norm bound of weight adaptation $W$. Specifically, DeLoRA takes the form of
\begin{equation}\label{eq:delora}
    W=\frac{\delta}{r} B^\top \Xi A:=\frac{\delta}{r} B^\top \diag(|b_i|_2\cdot |a_i|_2) A
\end{equation}
where $a_i, b_i$ are the $i$th row of $A\in \R^{r\times n}$ and $ B\in \R^{r\times m}$, respectively. The above parameterization can be rewritten as sum of NB-LoRA matrices with both rank and norm bound of 1:
\[
W=\frac{\delta}{r}\sum_{i=1}^r 2 \left(\frac{b_i}{\sqrt{2}|b_i|_2}\right)^\top \left(\frac{a_i}{\sqrt{2}|a_i|_2}\right)=\frac{\delta}{r}\sum_{i=1}^{r}2\bar{b}_i^\top \bar{a}_i=\frac{\delta}{r}\sum_{i=1}^{r}\bar{W}_i,
\]
where $[\, \bar{a}_i \; \bar{b}_i\,]$ is a set of of decoupled unit vectors. By \cref{thm:param} we have that $\|\bar{W}_i\|_F\leq 1$ and $\|W\|_F\leq \delta /r \sum_{i=1}^{r}\|\bar{W}_i\|_F\leq \delta$. NB-LoRA in (\ref{eq:w-param}) also has a similar representation:
\[
W=2 B^\top S A=\sum_{i=1}^{r} s_i (2 \hat{b}_i^\top \hat{a}_i) = \sum_{i=1}^{r} s_i \hat{W}_i.
\]
Different from DeLoRA, $[\,\hat{a}_i \; \hat{b}_i\,]$ is a set of coupled unit vectors as they are orthogonal to each other. This coupling behavior allows us to specify the bound for each singular value of $W$, providing tight control of a wide family of matrix norms. Another main difference is model initialization. Since it is not straightforward to initialize $A, B$ satisfying $W=0$ for (\ref{eq:delora}), the residual-type initialization \citep{meng2024pissa} is adopted, resulting a smaller reachable set than NB-LoRA when an explicit bound is specified, see detailed discussion in \cref{sec:theory}. 

\section{LLM Experimental Details and Additional Results}\label{sec:llm-appendix}

\paragraph{Training Details.} In our LLM experiments, we use the same training setup as \cite{meng2024pissa,alpaca}, i.e., AdamW \cite{loshchilov2018decoupled} with no weight decay. We use the cosine annealing scheduler with a warm-up ratio of 0.03. The default batch size is 128. We ensure $\alpha=r$ for all adapters, although NB-LoRA does not use this parameter. We choose the norm bound of $\delta=r$ with nuclear norm, which results in the same scaling factor as the other adapters. When Frobenius or spectral norm is used, we set the default bound as $\delta=\sqrt{r}$ and $\delta=1$, respectively, which also results in the same scaling factor as other adapters. We set {\tt lora\_dropout} to 0, and insert the adapters into all linear layers of the base model. We use BFloat16 for both the base model and the adapters. 

\paragraph{Ablation of NB-LoRA Design Choice.} We summarize the incremental design choices that transform LoRA into NB-LoRA in \cref{tab:ablation}. We conduct an ablation study on LLaMA-2-7B fine-tuning in \cref{tab:ablation-experiment}. For a large rank ($r=128$), NB-LoRA with spectral norm bound achieves slightly better performance, whereas the nuclear norm performs better under a low-rank budget. Both methods yield more robust performance compared to LoRA.

\begin{table}[!tb]
    \centering
    \caption{Summary of incremental design choices from LoRA to NB-LoRA.}
    \label{tab:ablation}
    % \vskip 0.15in
    \begin{center}
    \scalebox{0.84}{
    % \begin{sc}
    \setlength\tabcolsep{3pt}
    \begin{tabular}{ll|l}
        \toprule
        \midrule
        Design choice & Method & $W$ formulation  \\
        \midrule \midrule
        &LoRA 
        & $\frac{\alpha}{r}B^\top A$	\\
        +(Cayley transform) & NB-LoRA with $\|W\|_{S_\infty}\leq \delta$  &  
        $2\delta B^\top A $ with $(A,B)=\mathrm{Cayley}(\tilde{A},\tilde{B})$\\
        +(learnable scaling) & NB-LoRA with $\|W\|_{S_p}\leq \delta$ & $2\delta B^\top S A$ with $S=\mathrm{diag}(s)$ and $\|s\|_p\leq \delta$\\
        \bottomrule
    \end{tabular}
    % \end{sc}
    }
    \end{center}
    \vskip -0.1in
\end{table}

\begin{table}[!tb]
    \centering
    \caption{We report the GSM8K accuracy for ablation of NB-LoRA on fine-tuning LLaMA-2-7B models with different ranks and learning rates.}
    \label{tab:ablation-experiment}
    % \vskip 0.15in
    \begin{center}
    \scalebox{0.84}{
    % \begin{sc}
    \setlength\tabcolsep{3pt}
    \begin{tabular}{cc|ccc}
        \toprule
        \midrule
        \multirow{2}{*}{Rank} & \multirow{2}{*}{Method} & \multicolumn{3}{c}{Learning Rate}  \\
        & & 1e-4 & 5e-4 & 1e-3 \\
        \midrule \midrule
        \multirow{3}{*}{128}&LoRA 
        & 52.8& 58.3 & failed	\\
         & NB-LoRA (spectral)  & 57.7 & \textbf{60.5} & 60.0 
        \\
         & NB-LoRA (nuclear) & 57.8 & 59.7 & 59.2\\
        \midrule
         \multirow{3}{*}{16}&LoRA 
        & 43.2 & 55.8 & 57.5	\\
         & NB-LoRA (spectral)  & 47.9 & 55.3 & 56.5 
        \\
         & NB-LoRA (nuclear) & 49.4 & \textbf{56.8} & 55.6\\
        \bottomrule
    \end{tabular}
    % \end{sc}
    }
    \end{center}
    \vskip -0.1in
\end{table}

\paragraph{Robust Performance for Prolong Training.} We conduct a full epoch training of LLaMA-2-7B on the MetaMathQA dataset. The learning rates are chosen to be 1e-4 and 5e-4, which achieve good performance for different adapters in short horizon training (see \cref{fig:llm-lr}). We can observe that NB-LoRA consistently outperforms other methods. In particular, NB-LoRA is more stable for the large learning rate, due to the norm saturation on weight adaptation. Meanwhile, DoRA depicts unstable training and PiSSA has poor performance due to excessive increase in weight norm. 

\paragraph{Experiments on Various Ranks.} \cref{fig:llm-rank} explores the impact of rank on LoRA, DoRA, PiSSA and NB-LoRA with learning rate of 1e-4. Under the setup, PiSSA achieves the best GSM8K accuracy. As the rank decrease, the gap between NB-LoRA and PiSSA narrows. And NB-LoRA outperforms PiSSA for low ranks when $r< 16$. NB-LoRA outperforms LoRA and DoRA by approximately 5\% across all ranks. We also examine the effect of varying learning rates at rank 16, demonstrating robustness to learning rate choices across different ranks. 

\paragraph{Addition Computation Comparison between DoRA and NB-LoRA.} We first report the forward computation time of key operations in DoRA and NB-LoRA in \cref{tab:llm-BA-inv}, showing that inverting a small low rank matrix is much computationally cheaper than computing a large low-ran weight matrix. 

\begin{table}[!tb]
    \centering
    \caption{Computation time ($\mu\mathrm{s}$) of the rank-$r$ matrix $B^\top A\in \R^{m\times n}$ in DoRA and $M^{-1}\in \R^{r\times r}$ in NB-LoRA. We use $m=4096$, $n=4m$ and rank $r$ from 2 to 256. Computation time is measured based on 500 samples with 500 warm-up steps on RTX4090.}
    \label{tab:llm-BA-inv}
    % \vskip 0.15in
    \begin{center}
    \scalebox{0.84}{
    % \begin{sc}
    \setlength\tabcolsep{3pt}
    \begin{tabular}{c|cccccccc}
        \toprule
        \midrule
        Matrix Operation & 2 & 4 & 8 & 16 & 32 & 64 & 128 & 256 \\
        \midrule \midrule
        $B^\top A \in \R^{m\times n}$ 
        & 314.1$\pm$2.5	& 311.9$\pm$1.6	& 312.1$\pm$2.4	& 310.7$\pm$2.0 & 288.3$\pm$2.0 & 323.9$\pm$1.9 & 421.9$\pm$2.0 & 792.6$\pm$1.1\\
        $M^{-1}\in \R^{r\times r}$ &  
        29.2$\pm$0.9&30.7$\pm$0.9&35.1$\pm$1.4&48.3$\pm$0.9&
        72.9$\pm$1.1&97.0$\pm$2.0&169.7$\pm$1.4&361.3$\pm$1.7\\
        \bottomrule
    \end{tabular}
    % \end{sc}
    }
    \end{center}
    \vskip -0.1in
\end{table}

\begin{figure}[!tb]
    \centering
    \begin{subfigure}[b]{0.24\textwidth}
        \centering
        \includegraphics[width=\linewidth]{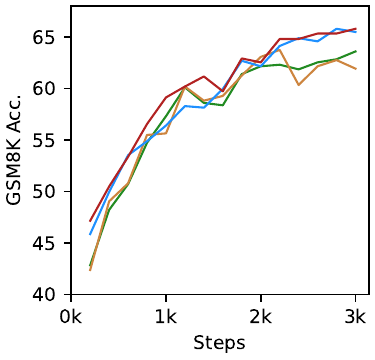}
        \caption{Test accuracy (lr=1e-4)}
    \end{subfigure}
    ~
    \begin{subfigure}[b]{0.24\textwidth}
        \centering
        \includegraphics[width=\linewidth]{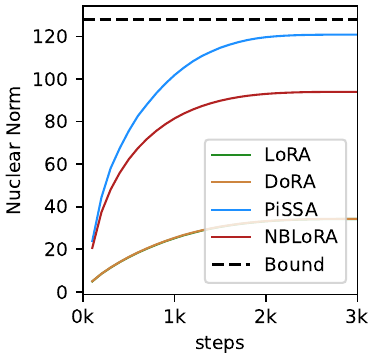}
        \caption{$\|W\|_{S_1}$ (lr=1e-4)}
    \end{subfigure}
    ~
    \begin{subfigure}[b]{0.48\textwidth}
        \centering
        \includegraphics[width=\linewidth]{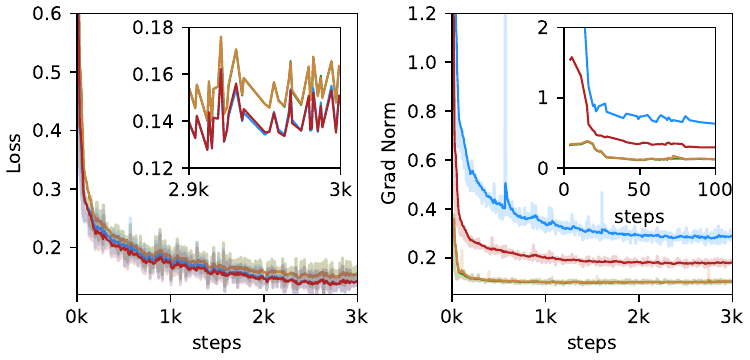}
        \caption{Loss and grad norm (lr=1e-4)}
    \end{subfigure}
    ~
    \begin{subfigure}[b]{0.24\textwidth}
        \centering
        \includegraphics[width=\linewidth]{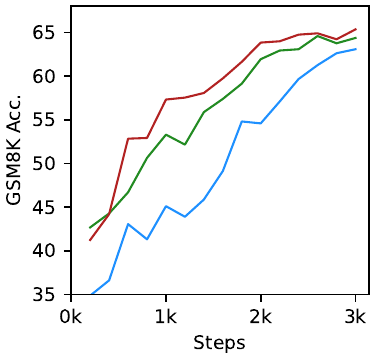}
        \caption{Test accuracy (lr=5e-4)}
    \end{subfigure}
    ~
    \begin{subfigure}[b]{0.24\textwidth}
        \centering
        \includegraphics[width=\linewidth]{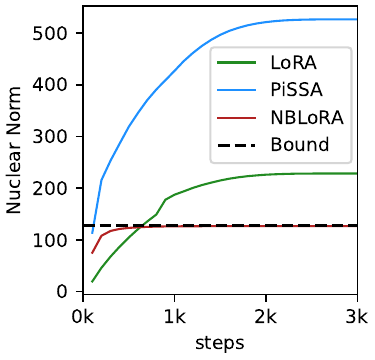}
        \caption{$\|W\|_{S_1}$ (lr=5e-4)}
    \end{subfigure}
    ~
    \begin{subfigure}[b]{0.48\textwidth}
        \centering
        \includegraphics[width=\linewidth]{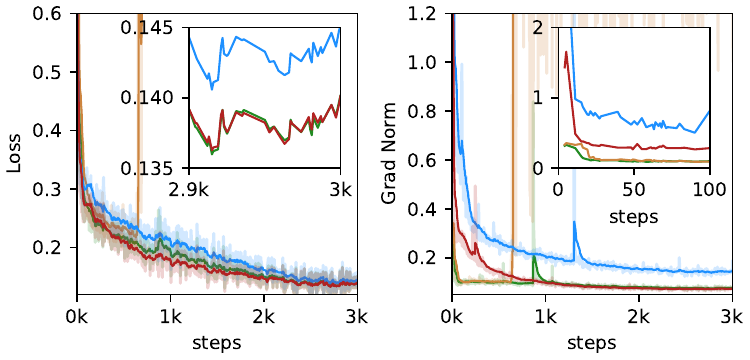}
        \caption{Loss and grad norm (lr=5e-4)}
    \end{subfigure}
    \caption{The evaluation accuracy, the nuclear norm bound, loss and grad norm over a full training epoch on MetaMathQA. The norm bound is computed by maximizing over all adaptation modules. }
    \label{fig:llm-full-epoch}
\end{figure}

\begin{figure}[!tb]
    \centering
    \includegraphics[width=\linewidth]{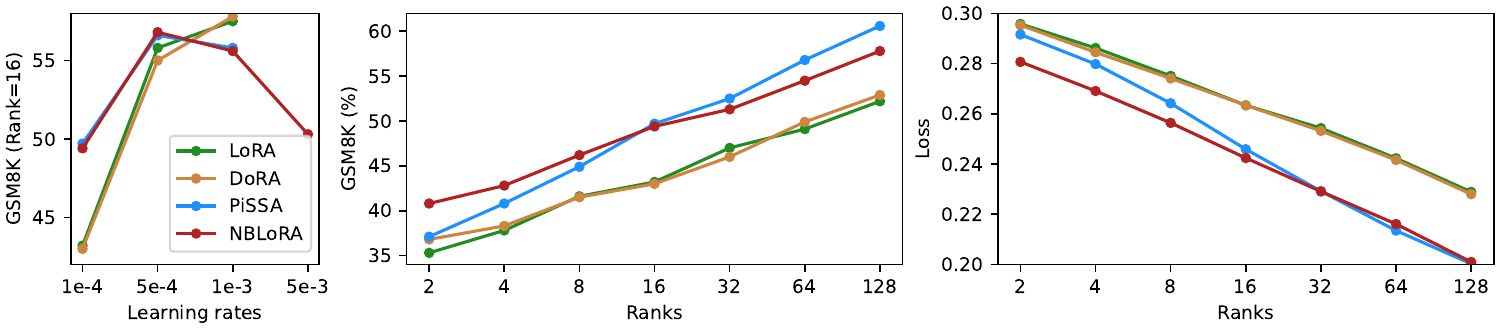}
    \caption{Comparison among LoRA, DoRA, PiSSA and NB-LoRA across ranks from 2 to 128. We also test the learning rate robustness for the case with $r=16$.}
    \label{fig:llm-rank}
\end{figure}

\newpage
\section{ViT Experiments}
\label{sec:vit_appendix}

\paragraph{Training Details.} A similar ViT fine-tuning experiments for the model forgetting issue can be found in \cite{bafghi2024parameter}. We take the ViT-B/16 model \cite{dosovitskiy2020image} and insert adaption blocks into the $Q, V$ matrices \cite{kopiczko2024elora}. We choose AdamW \cite{loshchilov2018decoupled} as the optimizer with default learning rate of $5\text{e-}3$ and weight decay of 0.01. For the full fine-tuning, we reduce the learning rate to $5\text{e-}4$. We take one-cycle learning rate scheduler with warm-up ratio of 0.1. We use batch size of 128 for SVHN dataset and 256 for CIFAR-100 and Food-101 dataset.

\paragraph{Extra results.} We report the ViT examples with different target datasets: CIFAR-100 and Food-101 in \cref{fig:hyper-parameter}. A similar conclusion as the SVHN experiment can be drawn from two datasets. 

\begin{figure}[!tb]
    \centering
    \begin{tabular}{c}
        \includegraphics[width=\linewidth]{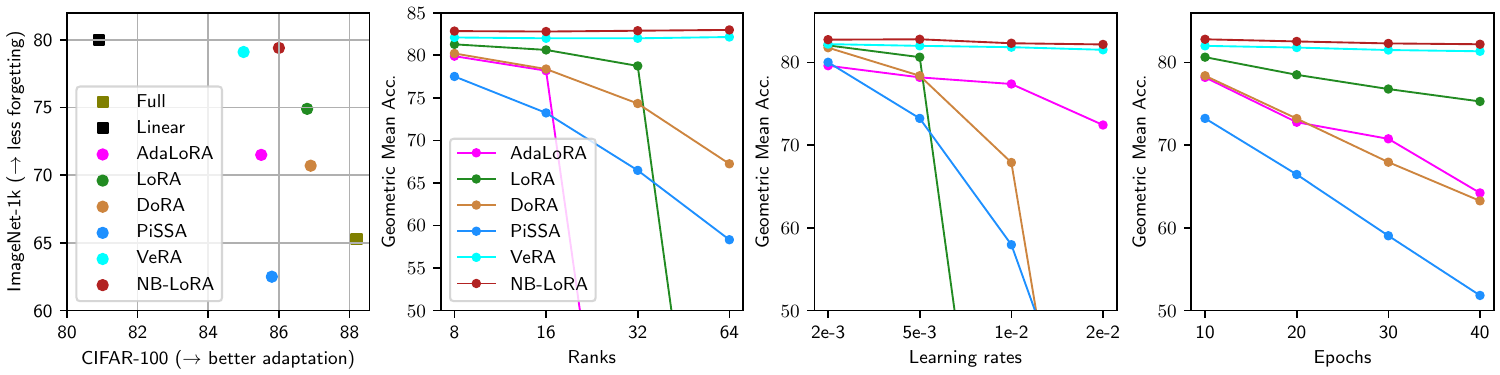}  \\
        \includegraphics[width=\linewidth]{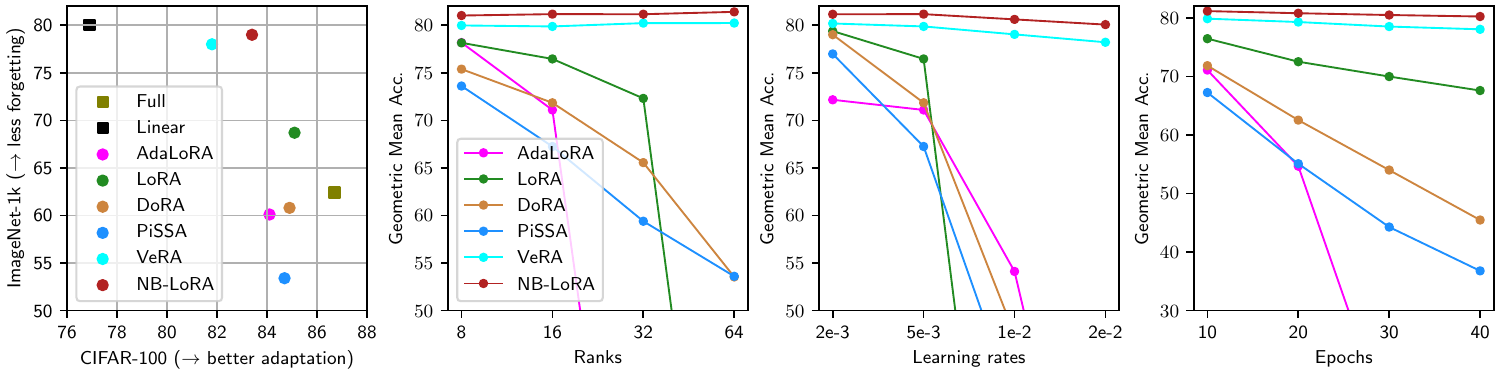} 
    \end{tabular}
    \caption{Geometric mean of CIFAR-100 (top) and Food-101 (bottom) with different adapters on various of hyper-parameter setup.}
    \label{fig:hyper-parameter}
\end{figure}

\end{document}